\newcommand{\removelatexerror}{\let\@latex@error\@gobble}
\newtheorem{theorem}{Theorem}
\newtheorem{remark}{Remark}
 \newcommand{\h}[1]{\mathbf{#1}}
\begin{document}

\title{Robust retrieval of material chemical states in X-ray microspectroscopy }

\author{Ting Wang, Xiaotong Wu, Jizhou Li, Chao Wang 
\thanks{Ting Wang, Xiaotong Wu, and Chao Wang are with the Department of Statistics and Data Science, Southern University of Science and Technology, Shenzhen 518055, China. Chao Wang is also with National Centre for Applied Mathematics Shenzhen, Shenzhen 518055, China.}
\thanks{Jizhou Li is with the School of Data Science, City University of Hong Kong, Kowloon, Hong Kong.}
\thanks{Chao Wang was partially supported
by the Natural Science Foundation of China (No. 12201286), HKRGC Grant No.CityU11301120, and the Shenzhen
Fundamental Research Program  JCYJ20220818100602005. (Corresponding author: Chao Wang, email: wangc6@sustech.edu.cn) }
}



\maketitle

\begin{abstract}
    X-ray microspectroscopic techniques are essential for studying morphological and chemical changes in materials, providing high-resolution structural and spectroscopic information. However, its practical data analysis for reliably retrieving the chemical states remains a major obstacle to accelerating the fundamental understanding of materials in many research fields. In this work, we propose a novel data formulation model for X-ray microspectroscopy and develop a dedicated unmixing framework to solve this problem, which is robust to noise and spectral variability. Moreover, this framework is not limited to the analysis of two-state material chemistry,
making it an effective alternative to conventional and widely-used methods. In addition, 
an alternative directional multiplier method  with provable convergence is applied to obtain the solution efficiently.
Our framework can accurately identify and characterize chemical states in complex and heterogeneous samples, even under challenging conditions such as low signal-to-noise ratios and overlapping spectral features. 
Extensive experimental results on simulated and real datasets demonstrate its effectiveness and reliability. 
\end{abstract}

\begin{IEEEkeywords}
X-ray microspectroscopy; image unmixing; total variation; Plug-and-Play prior.
\end{IEEEkeywords}

\section{Introduction}
\IEEEPARstart{X}{-ray} absorption spectroscopy (XAS) is a scientific technique that utilizes X-rays to investigate the electronic and structural properties of materials. However, the spatial resolution of XAS is typically limited to the micron or sub-micron scale, which poses a challenge when studying materials with complex or heterogeneous structures. In recent years, spectroscopic full-field transmission X-ray microscopy (TXM) has emerged as a novel tool for nanoscale chemical imaging, with great potential in various multidisciplinary fields~\cite{osti_1382491,wang2014operando}. By imaging at energy points across the absorption edge of the element of interest, TXM offers both high spatial resolution and chemical-specific information. Sub-50-nm resolution X-ray absorption near-edge structure (XANES) spectroscopy is routinely achieved with TXM-XANES, mainly operating in the hard X-ray regime (5 to 12 keV)~\cite{boesenberg2013mesoscale,yang2019simultaneously,zhang2017finding}.  
Its application areas include materials science, physics, chemistry, and biology. For instance, it can be used for chemical mapping in battery studies~\cite{xu2017situ,jiang2020machine} and mesoscale degradation~\cite{qian2021understanding}.

In TXM-XANES, the intensity change of each pixel is scrutinized to generate XANES spectra that are matched against reference compounds. Some common techniques, including the edge-50 or linear combination fitting (LCF)~\cite{osti_1382491},  are used to fit the spectra, then a two-dimensional colormap is constructed to display the chemical phase combination of each pixel. The XANES edge-50 point (energy at 0.5 spectrum position), which measures the absorption spectra of materials within the energy range of 5 keV to 12 keV, is a specific type of XANES spectrometer. The utilization of the edge-50 XANES technique has been progressively examined for characterizing the chemical composition and structure of environmental material~\cite{nelson2011three}.
On the other hand,  \cite{newville2014fundamentals} proposed using LCF to determine the phase composition of a chemical sample from normalized XANES spectra. The XANES image at each pixel represents a spectrum at a particular location, which can be fitted with reference spectra to produce spatially resolved chemical state information. This technique significantly simplifies the processing and analysis of XANES spectra using LCF.  These traditional methods have been extensively used in the literature~\cite{prietzel2011sulfur,gustafsson2020probabilistic, hesterberg2017speciation}.

Although traditional methods are widely applicable, these rely on the high quality of XANES imaging. In this case, a relatively slow acquisition process, with the recording of hundreds or thousands of energy points, is needed to achieve sufficient energy resolution. Fast TXM-XANES imaging is crucial for reliably solving morphological chemical phase transitions, as in 3D battery material research. To increase the speed of TXM-XANES imaging, energy points are reduced, or X-ray exposure time is minimized, which is more favorable for radiation-sensitive samples, similar to low-dose medical X-ray imaging applications.
However, excessively short exposure times can result in measurements with strong noise~\cite{li2022subspace}. 
Furthermore, during the process of acquiring XANES data, there are many variations in the X-ray exposure conditions, and inherent material properties, which contribute to the variability of XANES spectra~\cite{anzures2020xanes}. 
In the face of strong noise and spectral variability, the edge-50 and LCT methods fail to obtain a reasonable interpretation of elemental and chemical information. Despite efforts to optimize microscope hardware, the physical limitations of the TXM imaging system remain difficult to overcome. To address this obstacle, computational algorithm development is inevitable for improving downstream analysis through fitting results.

  Spectral unmixing methods~\cite{keshava2002spectral} have numerous applications in image science, including remote sensing~\cite{ma2013signal,li2020superpixel,wang2022adaptive}, optical microscopy~\cite{tzoumas2017spectral}, and X-ray imaging~\cite{ayhan2015use,yangdai2017spectral,rossouw2015multicomponent}. 
 The unmixing technique aims to decompose a spectrum of mixed pixels into a set of distinct spectral signatures, known as endmembers, along with their corresponding fractional abundances ~\cite{heylen2014review,bioucas2012hyperspectral}. By utilizing spectral unmixing in X-ray microspectroscopy, the chemical states of materials can be directly obtained bypassing the fitting process. 
 Various regularizations have been developed in spectral unmixing methods to utilize the prior information on the abundance map against noise. In addition, in the face of spectral variability, many model formulations have been proposed in the unmixing problems~\cite{drumetz2016blind,hong2018augmented,borsoi2021spectral,drumetz2020spectral,azar2021linear}.  
 The principle underlying the LCF method is essentially spectral unmixing~\cite{keshava2002spectral}, whereby the mixture is analyzed by determining the contribution of the reference spectra. However, it is sensitive to noise and limited in handling problems with spectral variability. 
 
The XANES unmixing task involving spectral variability can be formulated as an optimization model with some proper priors. We employ two regularization techniques to achieve this: the total-variation (TV) regularizer and the Plug-and-Play (PnP) prior. The TV regularizer is applied to the reconstructed image to incorporate spatial and spectral information through pixel connections in the unmixing process~\cite{iordache2012total,cruz2021extended}. On the other hand, the PnP technique utilizes state-of-the-art denoisers to tackle linear inverse problems in various hyperspectral image processing tasks~\cite{lin2019hyperspectral,gong2020learning,wang2020learning,zhao2021plug,chen2023integration}. 
The main contributions of this paper are summarized as follows:
\begin{itemize}
    \item We present a novel and robust framework for X-ray XANES imaging, which incorporates various realistic factors that affect the spectra, such as noise and spectral variability.
    \item The convergence of our proposed framework with TV regularization is theoretically analyzed, demonstrating its effectiveness and applicability.
    \item Our proposed framework is evaluated extensively using both quantitative and qualitative methods on synthetic and real datasets. The results indicate that our proposed methods surpass the state-of-the-art. Especially our framework with a PnP prior achieves the best performance. 
\end{itemize}

The rest of the paper is organized as follows. \Cref{sec:related}  briefly describes two related works.
In \Cref{sec:proposed}, we propose a novel data formulation model for the material chemical states retrieval in X-ray microspectroscopy and the corresponding algorithms to solve it. The convergence analysis is shown in \Cref{sec:convergence}.  \Cref{sec:results} presents the experimental results and some subsequent discussions. Finally, \Cref{sec:concl} provides a summary and future perspectives.
\section{Related Work}\label{sec:related}
This section will present a concise overview of two conventional methods to address this problem.  Furthermore, the form of the edge-50 and LCF serves as the baseline for our proposed approach.
\subsection{Edge-50}
The edge-50 point, representing the energy at the 0.5 spectrum position, is a crucial parameter in studying chemical state changes during battery cycling. It is often used to determine the phase map in a TXM-XANES system~\cite{andrews2010nanoscale, liu2011phase}.

After preprocessing and the post-edge and pre-edge region normalization~\cite{osti_1382491} of XANES images, a XANES spectrum is constructed for each pixel by plotting normalized absorption versus energy.
We can then obtain the ratio of chemical materials for each pixel by comparing the energies at the 0.5 spectrum position from the referenced spectra.

\subsection{Linear combination fitting (LCF)}
Besides the edge-50, LCF is another technique used to extract information about the electronic structure of elements in a complex sample from XANES spectra~\cite{newville2014fundamentals}. By using a least-squares approach, this technique decomposes the spectrum into a linear combination of reference spectra from known compounds. In fact, LCF is the classical linear mixing model (LMM) in the hyperspectral imaging unmixing field~\cite{keshava2002spectral, drumetz2016blind}.

Here the observed XANES image is represented by $\h Y = [\h y_1, \h y_2, \dots, \h y_N]\in \mathbb{R}^{T\times N}$, where each column vector is obtained by lexicographically ordering the TXM image with size $N=M\times K$, and $T$ is the number of energy points.
The LCF model generates the noisy measurements $\h Y$ from the chemical phase map $\mathbf{X}=[\h x_1, \h x_2, \dots, \h x_n]\in \mathbb{R}^{L\times N}$, pixel-wisely.
\begin{equation}
    \label{eq:LMM_pixel}
    \h y_k = \h A \h x_k + \h r_k, \ k = 1, \dots, N,
\end{equation}
where $\h A\in \mathbb{R}^{T\times L}$ is the dictionary, representing the reference materials spectra in the XANES images, $L$ is the number of materials, and $\h r_k$ is assumed to follow Gaussian distribution. In other words, \eqref{eq:LMM_pixel} can be rewritten in a matrix form:
\begin{equation}
\label{eq:LMM}
    \h Y = \h A \h X +\h R,
 \end{equation}
by considering all the pixels. Since the weight of a linear combination or chemical phase map is non-negative and sum-to-one, the estimation of the chemical phase map can be obtained by solving the following optimization problem:
\begin{equation}
\label{eq:fitting}
\begin{aligned}
\underset{\h X} \min \quad &\frac{1}{2}\|\h Y-\h A \h X\|_F^2\\
\text{s.t.}\quad &\h X\geq \mathbf{0}, \ \mathbf{1}^T \h X=\h 1.
\end{aligned}
\end{equation}
The chemical phase map $\h X$ in a model \eqref{eq:fitting} can be obtained using a non-negative constrained least squares algorithm, such as~\cite{heinz2001fully}.

Although traditional methods can quickly determine the composition of unknown samples, these methods are not robust under strong noise settings. 
\section{Proposed  Robust Unmixing Framework}\label{sec:proposed}
This section presents two models for effectively unmixing XANES images from model-based and learning-based perspectives, respectively. Both of them are solved by ADMM-based algorithm.
\subsection{Model formulation}
\begin{figure*}[ht]
    \begin{center}
\includegraphics[width=0.9\textwidth]{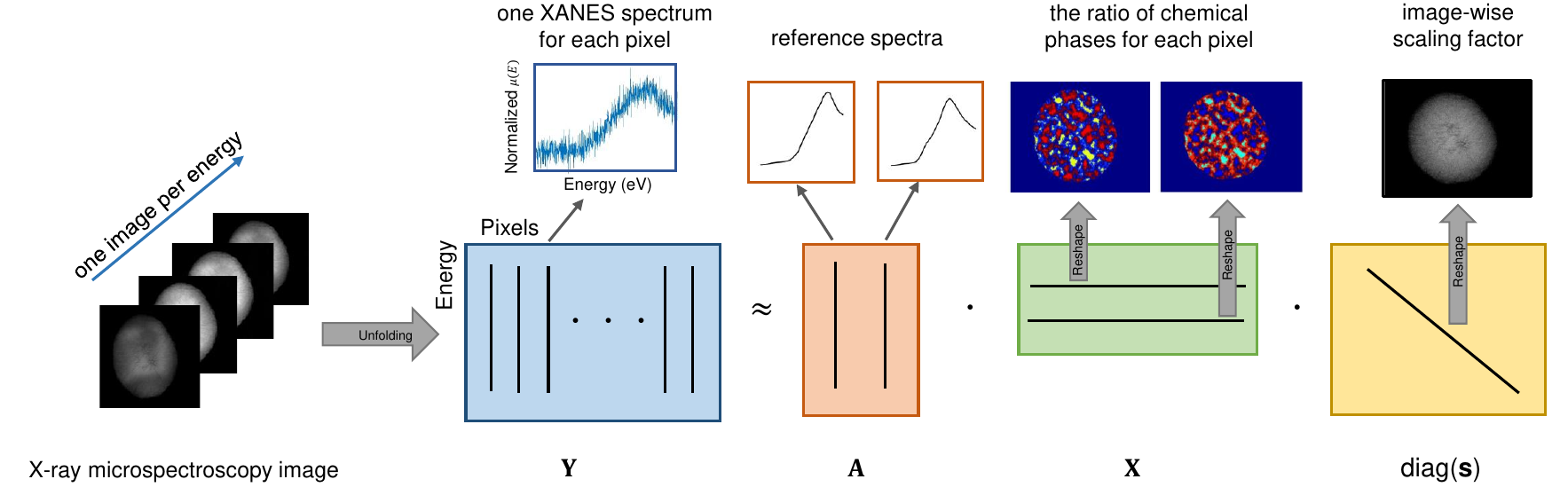} 
\end{center}
\caption{Framework of the proposed model for material chemical states retrieval in the form of unmixing for X-ray microscpectroscopy. The normalized XANES spectra from each pixel are unmixing to create a chemical phase map $\h X$, which also takes into account the scaling factor on the image. }
       \label{flowchart}
	\end{figure*}

The spectral variability induced by the various conditions can be effectively modeled by approximating the dictionary matrix of each pixel with the scaled version of the reference spectra. We propose an extended LCF model  by considering the variability in each pixel, i.e., \eqref{eq:LMM_pixel} can be changed to 
\begin{equation}
    \label{eq:ELMM_pixel}
        \h y_k = s_k\h A \h x_k + \h r_k, \ k = 1, \dots, N,
\end{equation}
where $s_k$ is a scalar in the $k$-the pixel. Similar to \eqref{eq:LMM}, we get the matrix form of \eqref{eq:ELMM_pixel} as 
\begin{equation}
    \label{eq:ELMM}
    \h Y = \h A \h X \mathrm{diag}(\h s) +\h R,
\end{equation}
where $ \mathrm{diag}(\h s)$ represents a diagonal matrix with its diagonal values $\h s = [s_1, s_2, \dots s_N]^T $ and $s_i \geq 0, \forall i\in 1, \dots, N$. Fig.  \ref{flowchart} gives the macro diagram of spectral unmixing for the XANES imaging.
With \eqref{eq:ELMM}, we get an optimization problem:
\begin{equation}
\label{eq:main_model}
\begin{split}
\underset{\h X,\h s} \min \quad &\frac{1}{2}\|\h Y-\h A \h X \mathrm{diag}(\h s)\|_F^2\\
\text{s.t.} \quad  &\h X\geq \mathbf{0}, \   \h s\geq \mathbf{0}, \ \mathbf{1}^T \h X=\h 1.
\end{split}
\end{equation}
Note that \eqref{eq:fitting} is equivalent to \eqref{eq:main_model} if we skip the sum-to-one constraint. However, combining $\h X$ and $\h s$ into a nonnegative least squares problem would lose some prior information on $\h X$ itself, especially when $\h X$ and $\h s$ are independent. In the following, we will utilize the prior information and propose a robust optimization framework under a low exposure time measurement. 
\subsection{Model-based approach: TV regularization}
TV regularization is a widely-used technique in image processing to promote sparsity in the gradient of the image~\cite{meiniel2018denoising,peng2021low}. Here the chemical phase map can be regarded as a group of images and have a piece-wise spatial correlation.  Hence, we first adapt TV regularization into \eqref{eq:main_model} and the proposed model can be expressed as follows:
\begin{equation}
\label{equ1}
\underset{\h X,\h s} \min \quad \frac{1}{2}\|\h Y-\h A \h X \mathrm{diag}(\h s)\|_F^2+\lambda \sum_{j=1}^L\|\nabla \h x_j\|_{1} + I_{\Omega_1}(\h X)+I_{\Omega_2}(\h s),
\end{equation}
where $\lambda$ is the regularization parameter,  $\h x_j$ is the $j$-th row in the chemical map $\h X$, and  the $L_1$ norm for vectors is denoted by $\|\cdot\|_1$. 
We define a discrete gradient operator,
$$
		\nabla \h z:= \left[\begin{matrix}
		\nabla_x\\
		\nabla_y
		\end{matrix} 
		\right]\h z, \forall \  \h z\in \mathbb{R}^N,
$$
where $\nabla_x,\nabla_y$ are the finite forward difference operator with a periodic boundary condition in the horizontal and vertical directions, respectively. 
Here $I_{\Omega}$ is the indicator function for the nonnegative value, i.e., 
\begin{equation}
I_\Omega(\h x)= 
    \begin{cases}
    0 & \h x \in  \Omega, \\
    +\infty & \text{otherwise. }
    \end{cases}
\end{equation}
In addition,  $\Omega_1 = \{ \h X | \h X\geq \mathbf{0} \text{ and }  \mathbf{1}^T \h X=\h 1 \}$ and $\Omega_2 = \{\h s |  \h s\geq \mathbf{0}\}$. 
After splitting the variables, the problem in \eqref{equ1} with auxiliary variables can be expressed as follows:
\begin{equation}
\label{eq:constraint}
\begin{split}
 \min_{\h X,\h s} \ &\frac{1}{2}\|\h Y-\h A \h M\|_F^2+\lambda\sum_{j=1}^L \|\h u_{j}\|_{1}
+ I_{\Omega_1}(\h W)+I_{\Omega_2}(\h t)\\
\text{s.t.} \ &\h M=\h X \mathrm{diag}(\h s), \h W= \h X, \h t=\h s\h,\\
&\h u_{j}=\nabla \h x_j, \text{ for } j = 1, \dots, L.
\end{split}
\end{equation}
Denote $\h U = [\h u_1, \h u_2, \dots, \h u_L]$, the corresponding augmented Lagrangian is:
\begin{equation}
\label{eq7}
\begin{split}
\mathcal{L}_1 & (\h X, \h s, \h M, \h U,\h W,\h t, \h F)\\
&=\frac{1}{2}\|\h Y-\h A \h M\|_F^2+\lambda\sum_{j=1}^L\|\h u_j\|_{1}+ I_{\Omega_1}(\h W)+I_{\Omega_2}(\h t)\\
&+\frac{\rho}{2}\|\h X \mathrm{diag}(\h s)-\h M+ \h C\|_F^2
-\frac{\rho}{2} \|\h C\|_F^2\\
&  +\frac{\rho}{2}\sum_{j=1}^L\left(\|\nabla \h x_j- \h u_{j}+\h d_j\|_F^2- \|\h d_j\|_F^2\right)\\
  &+ \frac{\rho}{2}\|\h X-\h W+\h E\|_F^2-\frac{\rho}{2} \|\h E\|_F^2\\
  &+\frac{\rho}{2}\|\h s-\h t+\h g\|_F^2-\frac{\rho}{2} \|\h g\|_F^2,
\end{split}
\end{equation}
where $\h C$, $\h D = [\h d_1, \h d_2, \dots, \h d_L]$, $\h E$,  $\h g$ are dual variables and $\lambda$,  $\rho$ are positive parameters.  Denote $\h F = [\h C; \h D; \h E;  \mathrm{diag}(\h g) ]$. We apply ADMM as the following scheme,
\begin{equation}
\label{eq8}
 \left \{
\begin{aligned}
 &\h X^{k+1} =\underset{\h X}{\arg\min} \ \mathcal{L}_1  (\h X, \h s^k, \h M^k, \h U^k,\h W^k, \h t^k,\h F^k),\\
 &\h s^{k+1} =\underset{\h s}{\arg \min} \ \mathcal{L}_1  (\h X^{k+1}, \h s, \h M^k, \h U^k,\h W^k,\h t^k,\h F^k),\\
 &\h M^{k+1} =\underset{\h M}{\arg \min} \ \mathcal{L}_1  (\h X^{k+1}, \h s^{k+1}, \h M, \h U^k,\h W^k,\h t^k,\h F^k),\\
& \h U^{k+1} =\underset{\h U}{\arg \min} \ \mathcal{L}_1  (\h X^{k+1}, \h s^{k+1}, \h M^{k+1}, \h U,\h W^k,\h t^k,\h F^k),\\
& \h W^{k+1} =\underset{\h W}{\arg \min}  \ \mathcal{L}_1  (\h X^{k+1}, \h s^{k+1}, \h M^{k+1}, \h U^{k+1},\h W,\h t^k,\h F^k),\\
& \h t^{k+1} =\underset{\h t}{\arg \min}  \ \mathcal{L}_1  (\h X^{k+1}, \h s^{k+1}, \h M^{k+1}, \h U^{k+1},\h W^{k+1},\h t,\h F^k),\\
& \h C^{k+1}= \h C^k+\h X^{k+1}\mathrm{diag}(\h s^{k+1})-\h M^{k+1},\\
& \h d_j^{k+1} = \h d_j^k+\nabla \h x_j^{k+1}-\h u^{k+1}_j, \text{ for } j= 1,\dots, L,\\
& \h E^{k+1} =  \h E^k+\h X^{k+1}-\h W^{k+1},\\
& \h g^{k+1} =\h g^k+ \h s^{k+1}-\h t^{k+1},\\
& \h F^{k+1} = [\h C^{k+1}; \h D^{k+1}; \h E^{k+1};  \mathrm{diag}(\h g^{k+1})]. 
\end{aligned}\right.
\end{equation}
We then elaborate on how to solve the six subproblems in \eqref{eq8}, by taking the derivative of $\mathcal{L}_1$ with respect to $\h X$, we obtain a closed-form solution,
\begin{equation}
\label{eq9}
\begin{aligned}
\h X^{k+1}=&\large((\h M^k-\h C^k)(\mathrm{diag}(\h s^k)+\sum_{j=1}^L \nabla^T(\h u^k_j-\h d_j^k) \\
&+\h W^k -\h E^k)\large) \large(\mathrm{diag}(\h s^k)^2-\Delta+ \h I\large)^{-1},
\end{aligned}
\end{equation}
where $\Delta=-\nabla^T\nabla$ represents the Laplacian operator. To solve for \eqref{eq9}, we use the conjugate gradient (CG) descent iterations. The $\h s$-subproblem in \eqref{eq8} has a closed-form solution.
\begin{equation}
\h s^{k+1} =  ( (\h X^{k+1})^T\h X^{k+1}+\h I)^{-1}((\h X^{k+1})^T(\h M^k-\h C^k)+\h t^k-\h g^k).
\end{equation}
The $\h M$-subproblem in \eqref{eq8} also has a closed-form solution, i.e.,
\begin{equation}
\h M^{k+1}=(\h A^T \h A+\rho \h I)^{-1}(\h A^T \h Y+\rho \h X^{k+1}\mathrm{diag}(\h s^{k+1}) +\rho \h C^k).
\end{equation}
To obtain the $\h {U}$-subproblem, we take the derivative of $\mathcal{L}_1$ with respect to $\h {u}_j$.
\begin{equation}
\label{shrink}
\h u_j^{k+1}=\textbf{shrink}(\nabla \h x_j^{k+1}+\h d_j^k,\frac{\lambda}{\rho}),\text{for} \ j= 1,\dots, L,
\end{equation}
where $\textbf{shrink}(x,\lambda)=\text{sign}(x)\max(|x|-\lambda,0)$.
Here we impose the sum-to-one constraint by normalizing $\h X $ in each iteration, then the subproblem of $\h W$ and $\h t$ are only for the non-negativity-constrained projection operator. Algorithm \ref{alg1} summarizes the whole process for solving model \eqref{eq7}.
\begin{figure}[htbp]
\renewcommand{\algorithmicrequire}{\textbf{Input:}}
\renewcommand{\algorithmicensure}{\textbf{Output:}}
 \removelatexerror
\begin{algorithm}[H]
	\caption{The Framework for XANES Image Unmixing with TV Regularization and PnP Priors. }
   \label{alg1}
     \begin{algorithmic}[1]
    \REQUIRE {A XANES image $\h Y$, Dictionary $\h A$.}
   \ENSURE {Phase map $\h X$, Scaling factor $\h s$.}
	\STATE Initialize: $\h X$ and $\h s$ and choose parameter $\rho$ and $\lambda$.
    \WHILE{not converged or iterations are not reached}
\STATE $\h X \,\leftarrow  \begin{cases}
      \text{is updated by} \ \eqref{eq9}  \,\text{for TV },\\
      \text{is updated by}  \ \eqref{PnPX}  \, \text{for PnP},
     \end{cases}$
     \STATE{Normalize $\h X$ such that $\mathbf{1}^T \h X=\h 1, $ }
 \STATE $ \h s \,\,\;\leftarrow 
    (\h X^T \h X+\h I)^{-1} (\h X^T(\h M-\h C)
     + \h t-\h g),$
  \STATE $ \h M  \leftarrow (\h A^T \h A+\rho \h I)^{-1}(\h A^T \h Y+\rho \h X \mathrm{diag}(\h s) +\rho \h C)$,
  \STATE $  \h U \, \leftarrow \begin{cases}
            \textbf{shrink}(\nabla \h x_j+\h d_j,\frac{\lambda}{\rho}), & \text{for TV }\\
             \textbf{Denoiser}(\h x_j+\h d_j,\frac{\lambda}{\rho}), & \text{for PnP}
        \end{cases}$
 \STATE $ \h W  \leftarrow \max(\h X+\h E,\h 0),$
   \STATE $ \h t \,\,\; \leftarrow \max(\h s+\h g,\h 0),$
   \STATE $      \h C \,\leftarrow \h C+\h X \mathrm{diag}(\h s)-\h M,$
  \STATE $\h d_j \leftarrow \begin{cases}
         \h d_j+\nabla \h x_j-\h u_j, & \text{for TV}\\
           \h d_j+\h x_j-\h u_j,& \text{for PnP}
      \end{cases}$
 \STATE $ \h E \,\leftarrow \h E+\h X-\h W,$
  \STATE $  \h g \; \leftarrow \h g+\h s-\h t $.
  \ENDWHILE
\end{algorithmic}
\end{algorithm}
\end{figure}
\subsection{Learning-based approach: PnP Priors}
 Designing a powerful regularizer can be challenging, as complex regularizers often complicate optimization problems, making the entire process more difficult. Rather than using a handcrafted regularizer, we aim to leverage prior knowledge about the spectral characteristics of materials in the scene to achieve better regularized unmixing results. Our proposed method can be seamlessly integrated into existing optimization frameworks, making it a practical and efficient tool for XANES image unmixing. The optimization problem can be formulated as follows:
\begin{equation}
\label{eq:pnp1}
\underset{\h X,\h s} \min \quad \frac{1}{2}\|\h Y-\h A \h X \mathrm{diag}(\h s)\|_F^2+\lambda \sum_{j=1}^L\Phi(\h x_j)  + I_{\Omega_1}(\h X)+I_{\Omega_2}(\h s),
\end{equation}
where $\Phi(\h X)$ represents some regularization term enforcing prior knowledge of $\h X$. 

By introducing the auxiliary variables, the optimization problem of \eqref{eq:pnp1} can be written in the equivalent form:
\begin{equation}
\begin{split}
&\underset{\h X,\h s} \min \quad \frac{1}{2}\|\h Y-\h A \h M\|_F^2+ \lambda \sum_{j=1}^L \Phi(\h u_j) + I_{\Omega_1}(\h W)+I_{\Omega_2}(\h t)\\
&\text{s.t.} \quad \h M=\h X \mathrm{diag}(\h s), \h u_j=\h x_j, \h W= \h X, \h t=\h s.
\end{split}
\end{equation}
and the augmented  Lagrangian is as follows:
\begin{equation}
\label{eq:pnp3}
\begin{split}
\mathcal{L}_{2}& (\h X, \h s, \h M, \h U,\h W,\h t, \h F)\\
&=\frac{1}{2}\|\h Y-\h A \h M\|_F^2+\lambda\sum_{j=1}^L\Phi(\h u_j)\\
&+\frac{\rho}{2}\|\h X \mathrm{diag}(\h s)-\h M+\h C\|_F^2-\frac{\rho}{2} \|\h C\|_F^2\\
&+\frac{\rho}{2} \sum_{j=1}^L\|\h x_j-\h u_j+\h d_j\|_F^2-\frac{\rho}{2} \|\h d_j\|_F^2\\
  &+ \frac{\rho}{2}\|\h X-\h W+\h E\|_F^2-\frac{\rho}{2} \|\h E\|_F^2\\
  &+\frac{\rho}{2}\|\h s-\h t+\h g\|_F^2-\frac{\rho}{2} \|\h g\|_F^2,
\end{split}
\end{equation}
where $\h C$, $\h D$, $\h E$, $\h g$ are dual variables and $\lambda$, $\rho$ are positive parameters. The method for solving model \eqref{eq:pnp3} is similar to that of model \eqref{eq7}. Specifically, 
by taking the derivative of $\mathcal{L}_2$ with respect to $\h X$, the $\h X$-subproblem has a closed-form solution. 
\begin{equation}
\label{PnPX}
\begin{aligned}
 \h X^{k+1}=&((\h M^{k}-\h C^{k})\mathrm{diag}(\h s^{k})+\sum_{j=1}^L(\h u^k_j-\h d^k_j) \\
 &+\h W^k-\h E^k)(\mathrm{diag}(\h s^k)^2 + 2\h I)^{-1}.
\end{aligned}
\end{equation}
The $\h u_j$-subporblem is to solve a proximal 
 proximal operator as follows:
\begin{equation}
\label{eq:prox}
\h u_j^{k+1}= \arg \min_{\h u} \frac{\rho}{2}||\h u-\h x^{k+1}_j-\h d^k_j||_F^2 +\lambda \phi(\h u),
\end{equation}
which 
can be viewed as an image-denoising problem. The goal is to eliminate additive Gaussian noise with a standard deviation of $\sigma =\sqrt{ \lambda/\rho}$. We employ established and effective denoising operators in the PnP framework iterations, such as the conventional BM3D ~\cite{dabov2007image} or DnCNN~\cite{zhang2017beyond}, which utilizes deep learning. After acquiring the necessary denoising operators, we update the primal and dual variables in the ADMM process, following Algorithm \ref{alg1}.

\begin{remark}{Dictionary selection:}
The proposed algorithm can quickly and accurately extract the spectral signal from the XANES imaging data. However, the reference spectra are a critical component for achieving optimal performance. 
When the reference spectra are unknown, we use the conventional spectra extraction method, which is the vertex component analysis (VCA) ~\cite{nascimento2005vertex} as a baseline for dictionary identification. In the real data analysis in Section \ref{c44}, we demonstrate that using VCA with denoising results in more accurate reference spectra extraction, particularly in strong-noise environments.
\end{remark}

\section{Convergence analysis}\label{sec:convergence}
In this section, we demonstrate the convergence of Algorithm \ref{alg1} with TV regularization.
Under some assumptions, we prove an optimal solution to the problem \eqref{eq7} can be found by iterating the scheme \eqref{eq8}. 
Although this result is not entirely satisfactory, it provides some theoretical guarantees for the reliability of Algorithm \ref{alg1} with TV regularization.  

Assuming  $\mathcal{X}^{\star}=\{\h X^{\star},\h s^{\star},\h M^{\star},\h U^{\star},\h W^{\star},\h t^{\star},\h C^{\star},\h D^{\star},\h E^{\star},\h g^{\star}\}$ as the fixed point, then the Karush-Kuhn-Tucker (KKT) condition of \eqref{eq:constraint} can be summarized as follows: 
\begin{equation}
\label{eq10}
\begin{split}
\left \{
\begin{array}{llllll}
\h 0=&\h M^{\star}-\h X^{\star} \mathrm{diag}(\h s^{\star}),\\
\h 0=&\h u_j^{\star}-\nabla \h x_j^{\star}, \text{ for } j= 1,\dots, L, \\
\h 0=&\h W^{\star}-\h X^{\star},\\
\h 0=&\h t^{\star}-\h s^{\star},\\
\h 0=&\h C^{\star}\mathrm{diag}(\h s^{\star})+ \sum_{j=1}^L\nabla^T \h d_j^{\star}+\h E^{\star},\\
\h 0=&\h X^{\star T}\h C^{\star}+\rho \mathrm{diag}(\h g^{\star}),\\
\h 0=& \h A^T(\h Y-\h A\h M^{\star})-\rho \h C^{\star},\\
\h 0 \in & \lambda \partial \|\h u_j^{\star}\|_1-\rho \h d_j^{\star},  \text{ for } j= 1,\dots, L,\\
\langle\h E^{\star},&\h W-\h W^{\star}\rangle \leq 0 \quad \forall \ \h W \in \Omega_1, \\
\langle\h g^{\star},&\h t-\h t^{\star}\rangle \leq 0 \quad \forall \ \h t \in \Omega_2.\\
\end{array}
\right.
\end{split}
\end{equation}
\begin{theorem}
\label{thm1}
Let $\mathcal{X} ^k
$ be generated by Algorithm \ref{alg1}, if the successive differences of the multipliers $\h C^{k+1}-\h C^k$, $\h D^{k+1}-\h D^k$, $\h E^{k+1}-\h E^k$ and $\h g^{k+1}-\h g^k$ all converges to $\h 0$ as $k$ tends to $\infty$, and if $\{\h s^k\}$ is bounded, then there exists a subsequence $\mathcal{X}^{k_j}$
whose accumulation point satisfies the KKT condition of \eqref{eq:constraint}.
\end{theorem}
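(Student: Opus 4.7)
The plan is to follow the classical subsequential convergence strategy for ADMM. The vanishing-differences hypothesis, combined with the multiplier update rules in \eqref{eq8}, immediately yields the first four lines of \eqref{eq10}: since $\h C^{k+1}-\h C^k=\h X^{k+1}\mathrm{diag}(\h s^{k+1})-\h M^{k+1}\to\h 0$ and analogously for $\h D^k,\h E^k,\h g^k$, every accumulation point of the primal sequence automatically satisfies the four primal-feasibility relations $\h M^\star=\h X^\star\mathrm{diag}(\h s^\star)$, $\h u_j^\star=\nabla\h x_j^\star$, $\h W^\star=\h X^\star$ and $\h t^\star=\h s^\star$.

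Next I would show that the whole iterate sequence $\{\mathcal{X}^k\}$ is bounded, so Bolzano--Weierstrass supplies a convergent subsequence. The set $\Omega_1$ is compact (product of probability simplices), hence $\{\h W^k\}$ is bounded, and then $\{\h X^k\}$ is bounded because $\h X^k-\h W^k\to\h 0$. The hypothesis gives $\{\h s^k\}$ bounded, so $\{\h t^k\}$, $\{\h M^k\}$ and $\{\h U^k\}$ inherit boundedness from the remaining three feasibility residuals and the continuity of $\nabla$. For the multipliers I would use the explicit optimality identity coming from the $\h M$-update, namely $\rho\h C^k=\h A^T(\h Y-\h A\h M^{k+1})+\rho(\h M^{k+1}-\h X^{k+1}\mathrm{diag}(\h s^{k+1}))$, and the shrinkage identity $\h d_j^k=\h u_j^{k+1}-\nabla\h x_j^{k+1}+(\lambda/\rho)\h v_j^{k+1}$ with $\h v_j^{k+1}\in\partial\|\h u_j^{k+1}\|_1$ bounded by $1$ in $\ell_\infty$; these bound $\{\h C^k\}$ and $\{\h D^k\}$.

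With all iterates bounded along a subsequence $\mathcal{X}^{k_j}\to\mathcal{X}^\star$, I would pass to the limit in the first-order optimality condition of each of the six subproblems in \eqref{eq8}. For the smooth blocks $\h X,\h s,\h M$, continuity of the gradients yields the three equalities in \eqref{eq10} involving $\h A^T(\h Y-\h A\h M^\star)$, $\h X^{\star T}\h C^\star+\rho\mathrm{diag}(\h g^\star)$ and $\h C^\star\mathrm{diag}(\h s^\star)+\sum_j\nabla^T\h d_j^\star+\h E^\star$, where the primal-feasibility residuals collected above are used to cancel the cross terms. For the $\h U$-block, the shrinkage inclusion $\h d_j^{k_j}+\nabla\h x_j^{k_j+1}-\h u_j^{k_j+1}\in(\lambda/\rho)\partial\|\h u_j^{k_j+1}\|_1$ passes to the limit via the graph-closedness (outer semicontinuity) of the convex subdifferential. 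For $\h W$ and $\h t$, the projection characterization gives $\langle\h E^{k_j+1},\h W-\h W^{k_j+1}\rangle\le 0$ for every $\h W\in\Omega_1$, and the analogous inequality for $\h t$, which both survive the limit and produce the last two variational inequalities of \eqref{eq10}.

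The main obstacle I expect is the control of $\h E^k$ and $\h g^k$: they originate from the indicator functions of the polyhedra $\Omega_1,\Omega_2$, whose normal cones contain unbounded recession directions, and so bounding them directly from the $\h W$- and $\h t$-subproblems is delicate on active faces. The cleanest workaround is to avoid bounding $\h E^k,\h g^k$ in norm directly and instead keep them in inequality form $\langle\h E^{k+1},\h W-\h W^{k+1}\rangle\le 0$; since $\h E^{k+1}-\h E^k\to\h 0$ by hypothesis, a limit point $\h E^\star$ can then be recovered from the stationarity equation for $\h X$ (which expresses $\h E^\star$ as an explicit continuous function of the remaining variables) and is automatically consistent with the variational inequality in the limit. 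The remaining pieces are standard continuity and subdifferential-closedness bookkeeping and should not present further difficulty.
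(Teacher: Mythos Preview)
Your proposal is correct and follows essentially the same route as the paper: use the multiplier updates to obtain primal feasibility in the limit, extract boundedness of the primal iterates starting from the compactness of $\Omega_1$, then pass to the limit in the first-order optimality conditions of each ADMM subproblem to recover the remaining KKT relations. If anything, your treatment is more careful than the paper's, which does not explicitly address the boundedness of the multipliers $\h C^k,\h D^k,\h E^k,\h g^k$ (and cites \cite{wen2012alternating} for the subdifferential inclusion you handle via graph closedness); your identification of the $\h E^k,\h g^k$ issue and the workaround via the $\h X$-stationarity relation is a legitimate refinement.
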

\begin{proof}
Since  $\lim_{k \rightarrow \infty} \h C^{k+1}-\h C^k=\h 0$ and $\lim_{k \rightarrow \infty} \h D^{k+1}-\h D^k=\h 0$, and the multiplier updates are given by \eqref{eq8}, it can be conclude that
\begin{equation}\label{eq:kkt_cond}
\begin{aligned}
\lim_{k \rightarrow \infty}& \h X^k\mathrm{diag}(\h s^k)-\h M^k=\h 0,\\
\lim_{k \rightarrow \infty}& \nabla \h x_j^k-\h u_j^k=\h 0, \text{ for } j= 1,\dots, L.
\end{aligned}
\end{equation}
Here $\h W^k$ is bounded owing to the constraint $\Omega_1$. Incorporating  $\lim_{k \rightarrow \infty} \h E^{k+1}-\h E^k=\h 0$, we get the boundedness of $\h X^k $. Hence there exists a bounded subsequence such that  $\lim_{j \rightarrow \infty} \h E^{k_j} =  \lim_{j \rightarrow \infty} \h X^{k_j} = \h X^\star $. It can be inferred that both $\h M^k, \h U ^k$, and $\h t$ are bounded from the same analysis. Therefore, the following system of equations holds:
\begin{equation}
\label{KKT1}
\begin{split}
\left\{
\begin{array}{llllll}
\h X^{\star}\mathrm{diag}(\h s^{\star})=\h M^{\star},\\
\nabla \h x_j^{\star} =\h u_j^{\star}, \text{ for } j= 1,\dots, L,\\
\h W^{\star} =\h X^{\star},\\
\h t^{\star} =\h s^{\star}.\\
\end{array}
\right.
\end{split}
\end{equation}
The optimality condition associated with $\h X$-subproblem can be written as:
\begin{equation}
\begin{split}
&(\h X^{\star}\mathrm{diag}(\h s^{\star})-\h M^{\star}+\h C^{\star}) \mathrm{diag}(\h s^{\star}) +\sum_{j=1}^L\nabla^T(\nabla \h x_j^{\star}-\h u_j^{\star}\\
&+\h d_j^{\star})+(\h X^{\star}-\h W^{\star}+\h E^{\star})=\h 0.\\
\end{split}
\end{equation}
Using \eqref{KKT1}, we have:
\begin{equation}
\h C^{\star}\mathrm{diag}(\h s^{\star})  + \sum_{j=1}^L\nabla^T \h d_j^{\star}+\h E^{\star}=\h 0.
\end{equation}
Similarly, by the optimality conditions associated with $\h s$ and $\h M$-subproblems, we obtain
\begin{equation}
\begin{split}
\left \{
\begin{array}{lllll}
\h X^{\star T}\h C^{\star}+\rho \mathrm{diag}(\h g^{\star})&=\h 0,\\
\h A^T(\h Y-\h A\h M^{\star})-\rho \h C^{\star}&=\h 0.\\
\end{array}
\right.
\end{split}
\end{equation}
The proof of the final relationship on the stationary condition  in \eqref{eq10} can be found in~\cite{wen2012alternating}, the specific proof of the relationship is omitted here.
Lastly, from optimality condition in the $\h W$-subproblem  in \eqref{eq8} we get
\begin{equation*}
\begin{split}
& \langle-\h E^k, \h W-\h W^{k+1}\rangle\geq  -\langle \h W^{k+1}- \h X^{k+1}, \h W-\h W^{k+1}\rangle   \\ 
& \geq  -\| \h W^{k+1}- \h X^{k+1}\| \|\h W-\h W^{k+1}\|, \  \forall \ \h W \in \Omega_1,\\
& \langle -\h g^k, \h t-\h t^{k+1}\rangle\geq   -\langle \h t^{k+1}-\h s^{k+1}, \h t-\h t^{k+1}\rangle \\ &  \geq -\| \h t^{k+1}-\h s^{k+1}\| \|\h t-\h t^{k+1}\|,  \forall \ \h g \in \Omega_2.
\end{split}
\end{equation*}
Thus, we have
\begin{equation*}
\begin{split}
&\langle\h E^{\star},\h W-\h W^{\star}\rangle \leq 0 \quad \forall \ \h W\geq \h 0,\\
&\langle\h g^{\star},\h t-\h t^{\star}\rangle \leq 0 \quad \forall \ \h g\geq \h 0.
\end{split}
\end{equation*}
\end{proof}
\begin{remark}
    Note that  \eqref{equ1} is a non-convex optimization problem with respect to both $\h X$ and $\h s$, and our algorithm \eqref{eq8} is a four-block ADMM. The corresponding convergence analysis is very challenging. We refer to some existing work \cite{chang2016phase,bui2023stochastic,wen2012alternating} and establish the convergence under the assumption of the successive differences of the multipliers, which is empirically verified in \Cref{sec:results}. 
\end{remark}

\section{Experiments and Results}\label{sec:results}
In this section, we will evaluate the performance of the proposed methods quantitatively and visually on both synthetic and actual datasets.
Regarding the comparison with different priors,
our methods were divided into two groups:  the model-based method (RUM$_{\tiny \mbox{TV}}$) and learning-based methods (RUM$_{\tiny \mbox{PnP}_{\tiny 1}}$ denotes as  PnP with DnCNN~\cite{zhang2017beyond}, RUM$_{\tiny \mbox{PnP}_{\tiny 2}}$  denotes as PnP with BM3D~\cite{dabov2007image}). These proposed methods will be assessed in comparison to the traditional methods, namely edge-50 and LCF.
\subsection{Experimental Settings and Evaluation Metrics}
\begin{figure}[htbp]
		\begin{center}
  \begin{tabular}{cc}
     \includegraphics[width=0.45\textwidth]{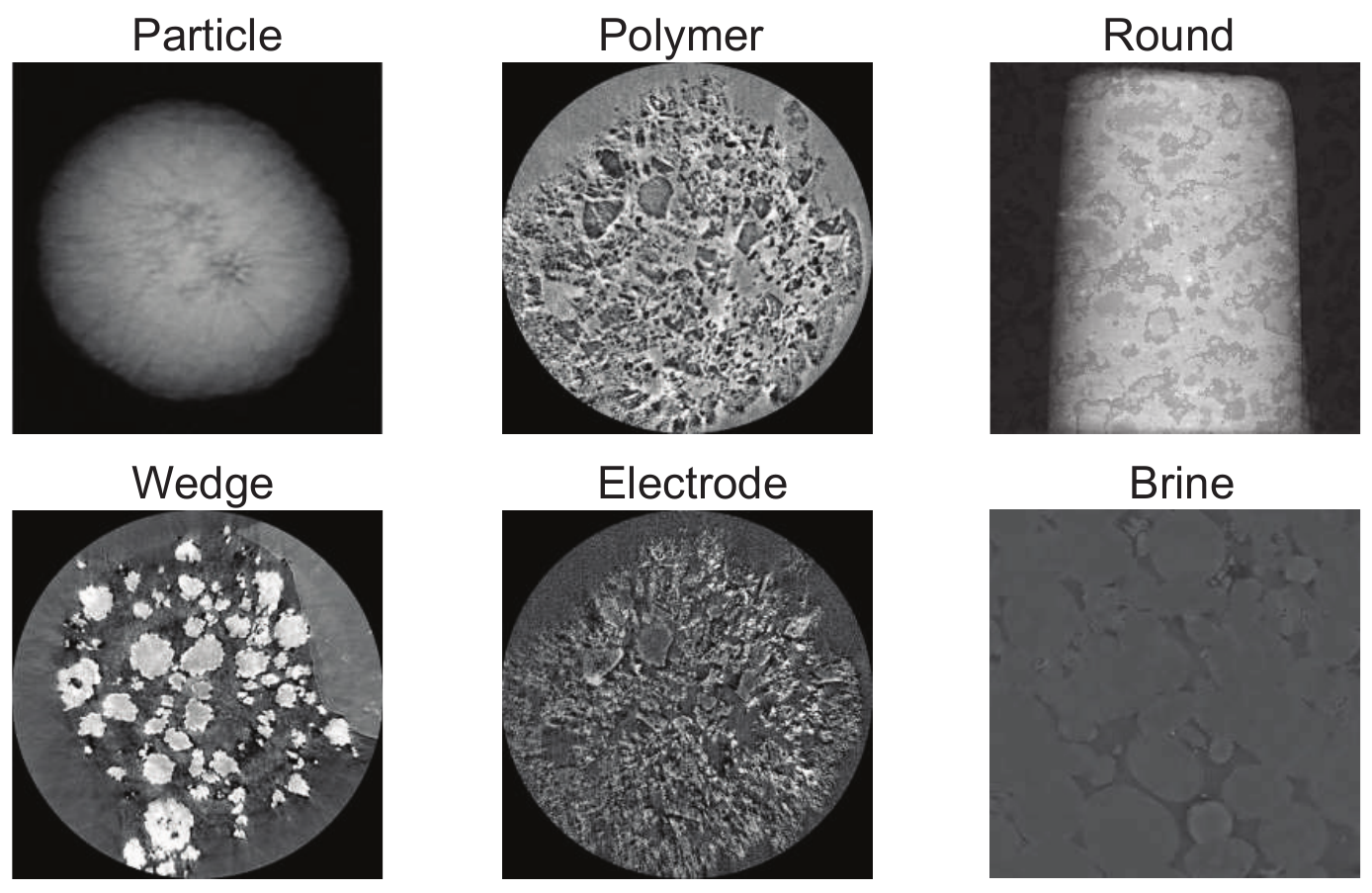} 
       \end{tabular}
		\end{center}
		\caption{Typical examples of the
test datasets: projections (top) and reconstructed slices (bottom).}
          \label{real_data}
	\end{figure}
\begin{figure}[htbp]
		\begin{center}
  \begin{tabular}{cc}
     \includegraphics[width=0.30\textwidth]{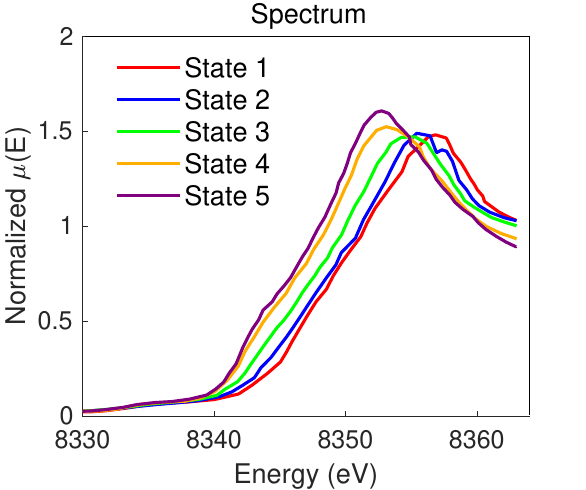} 
       \end{tabular}
		\end{center}
		\caption{Normalized spectra under different Ni valence states of X-ray XANES in a battery cathode. State 1, 2, 3, 4, 5 represent the different valence states of Ni, respectively.}
          \label{spectrum}
	\end{figure}
 
 \textbf{Data Description}. The dataset presented in Fig.  \ref{real_data} comprises three X-ray projection images (Particles, Polymer and Round) and three reconstructed slices (Wedge, Electrode and Brine), which were utilized to create a simulation of 2D and 3D TXM-XANES imaging scenarios. 
 To generate simulated movie data, as shown in Fig.  \ref{spectrum}, 
 the reference spectra of different Ni valence states were randomly assigned to pixels in the images for various phase maps. 
The sample is assumed to contain various valence states of Ni elements, and the proportion of Ni elements satisfies the sum-to-one constraint. We use number $(1,2,3,\dots, L) $ to describe the state. 
    
    \textbf{Evaluation Metrics}.
 Each synthetic dataset frame is further corrupted with additive Gaussian noise with varying noise levels, with the standard deviation $\sigma \in [1, 7]$.
For the performance assessment of the algorithms, we utilize two commonly used criteria to measure the accuracy of the phase map: the peak signal-to-noise ratio (PSNR) and the structural similarity index (SSIM). 
PSNR is defined as follows:
\begin{equation}
    \text{PSNR}=20\times \log _{10} \big (\text{MAX}/\text{RMSE} ),
\end{equation}
where $\text{MAX}$ is the maximum pixel value of the estimated image $\hat{\h X}$, and $\text{RMSE}$ is the root mean square error between $\hat{\h X}$ and the ground truth $\h X$. The RMSE is defined as:
\begin{equation}\label{eq:rmse}
    \text{RMSE}=\sqrt{\frac{1}{n_1n_2}\sum_{i=1}^{n_1} \sum_{j=1}^{n_2} \|\hat x(i,j)-x(i,j)\|^2},
\end{equation}
where $n_1$ and $n_2$ are the number of rows and columns in the image $\h X$. We use the estimated phase map $\hat{\h X}$ and the ground truth $\h X$ to calculate PSNR. 

SSIM is a metric that quantifies the similarity between two images. The SSIM formula is expressed as follows:
\begin{equation}
    \text{SSIM}(\hat{\h X},\h X)=\frac{[(2\mu_{\hat {\h X}}\mu_{\h X}+c_1)*(2\sigma_{\hat{\h X}\h X}+c_2)]}{[(\mu_{\hat {\h X}}^2+\mu_{{\h X}}^2+c_1)*(\sigma_{\hat {\h X}}^2+\sigma_{\h X}^2+c_2)]},
\end{equation}
where $\mu_{\hat {\h X}}$ and $\mu_{\h X}$ represent the means of $\hat{\h X}$ and $\h X $, respectively. $\sigma_{\hat {\h X}}$ and $\sigma_{\h X}$ denote the standard deviations of $\hat{\h X}$ and $\h X $, respectively. $\sigma_{\hat{\h X}\h X}$ is the covariance of $\hat{\h X}$ and $\h X $, while $c_1$ and $c_2$ are small constants added to prevent division by zero errors and stabilize the formula. 

   \textbf{Parameter Settings}. The maximum iteration of the RUM algorithm is set as 100 for all scenarios. To achieve the best RMSE results for simulated data,  $\lambda$ and $\rho$ are fine-tuned for various methods. 
   Fig.  \ref{parameter_correlation} illustrates the effect of parameters $\rho$ and $\lambda$ on the performance of RUM${\tiny \mbox{TV}}$ using Particle data with $\sigma=3$. 
   We observe  that $\lambda$ regulates the influence of the regularization term and significantly affects the unmixing results, whereas $\rho$ is a penalty parameter in the augmented Lagrangian function and only affects the convergence speed. 
\begin{figure}[htbp]
		\begin{center}
				\includegraphics[width=0.32\textwidth]{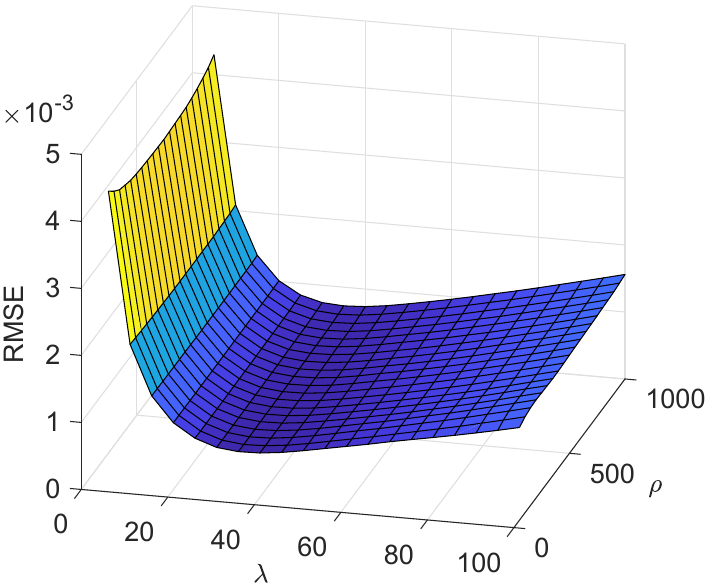} 
		\end{center}
		\caption{The study analyzed the relationship between RMSE and regularization parameters using Particle data ($\sigma=3$) and evaluated the performance of the RUM$_{\tiny \mbox{TV}}$  method.}
  \label{parameter_correlation}
\end{figure}
\subsection{Analysis of the proposed algorithm}
 \begin{figure}[htbp]
		\begin{center}
                    \includegraphics[width=0.24\textwidth, height=0.2\textwidth]
                    {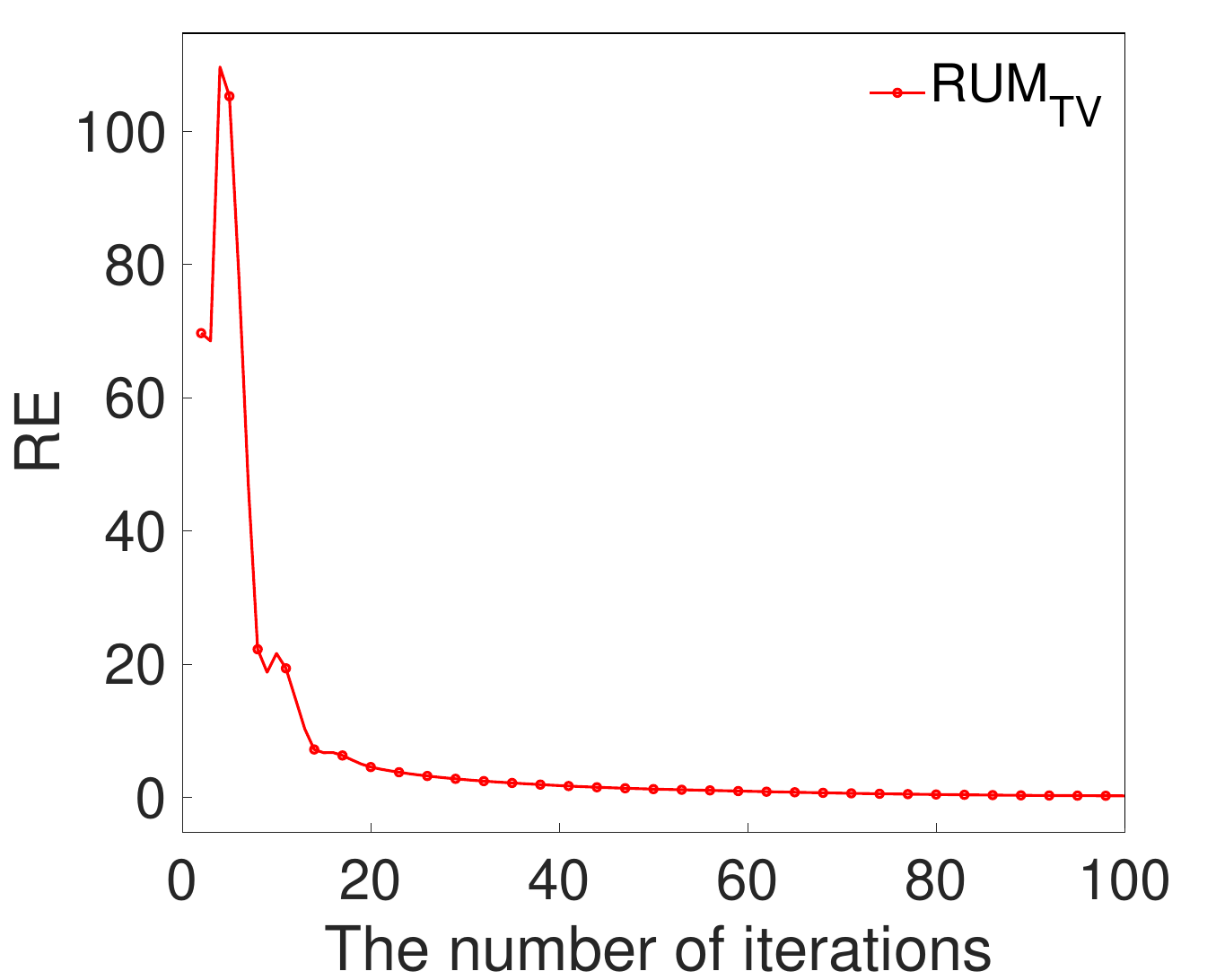}
                    \hspace{-10pt}
                    \includegraphics[width=0.24\textwidth, height=0.201\textwidth]
                    {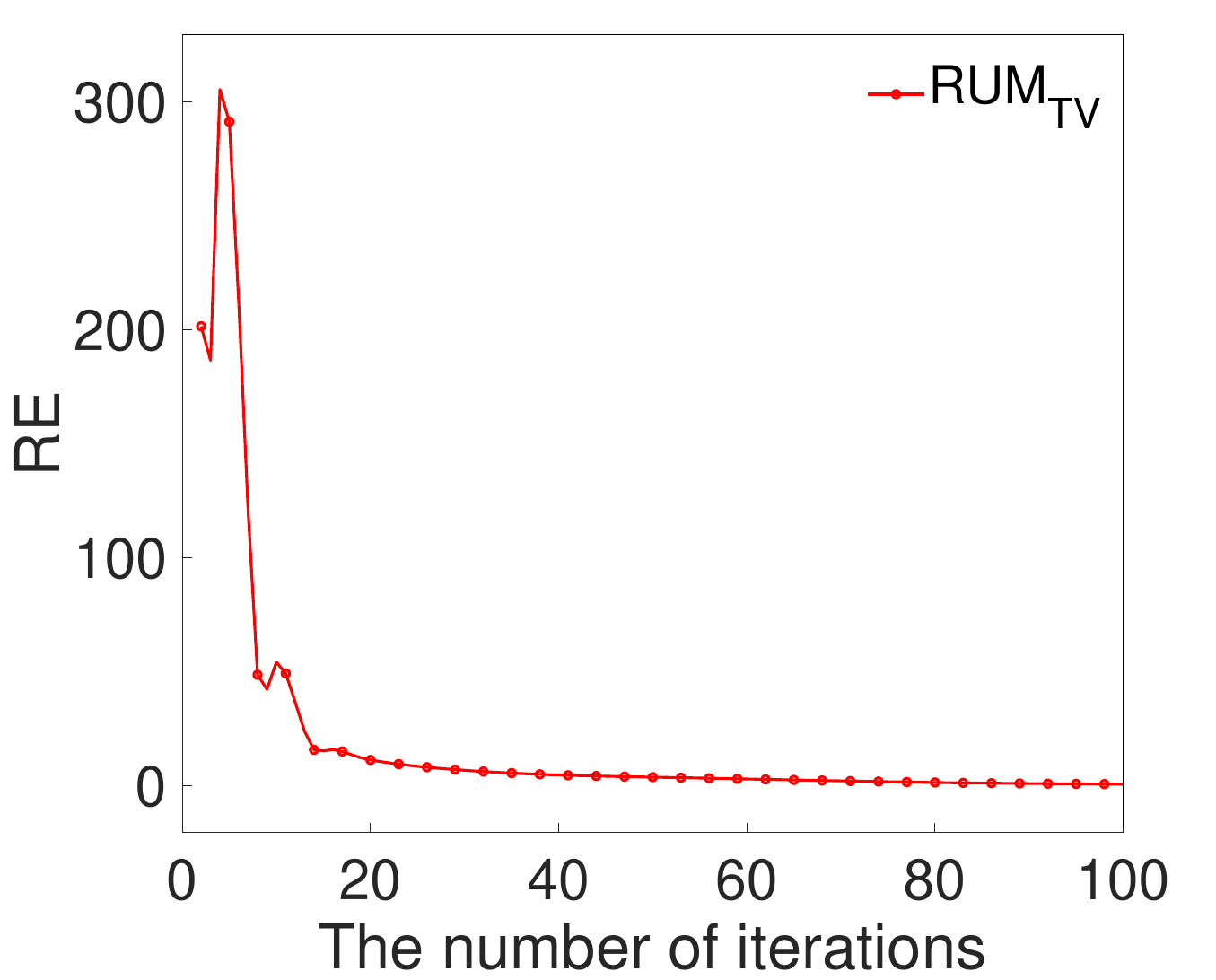}\\
                    			    \includegraphics[width=0.24\textwidth, height=0.2\textwidth]{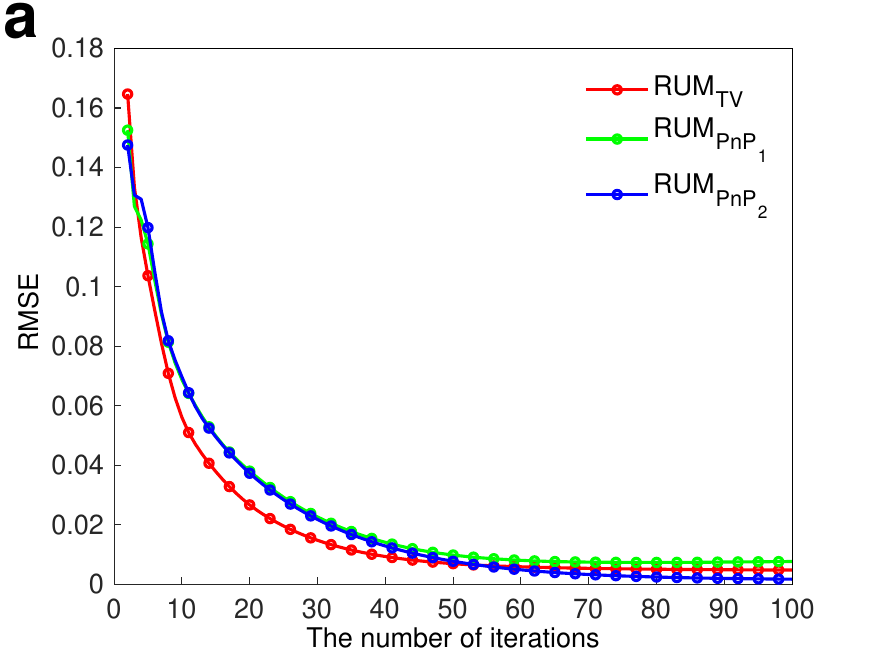}
        \hspace{-10pt}
                    \includegraphics[width=0.24\textwidth, height=0.2\textwidth]{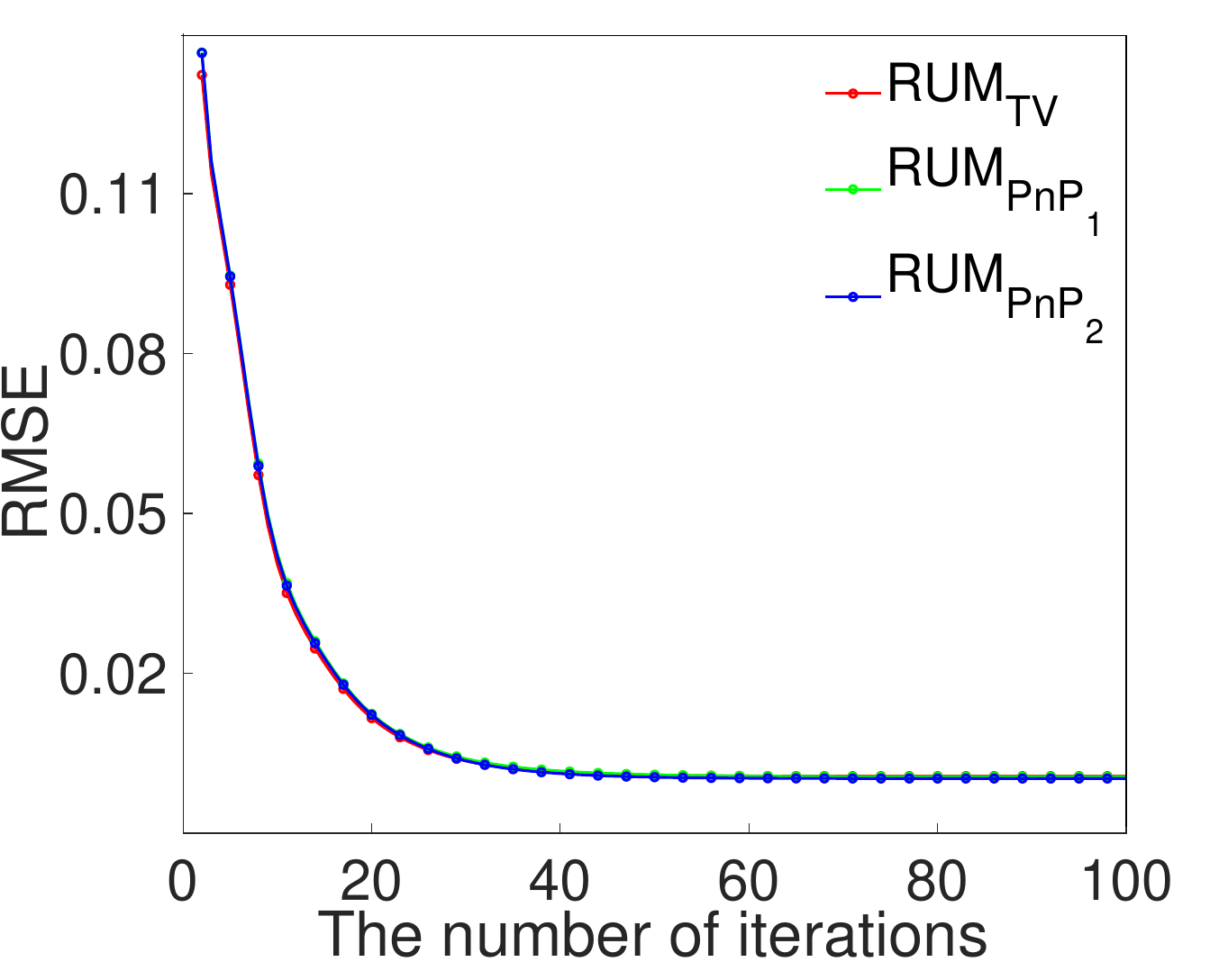}
		\end{center}
		\caption{The RE and RMSE curves of our proposed methods were analyzed using the Particle (left) data and the Brine data (right) under the noise level  $\sigma=3$. }
  \label{convergence}
	\end{figure}
\textbf{Convergence}.  In \Cref{sec:convergence}, we have proved the convergence analysis on  Algorithm \ref{alg1} with the TV regularization under some assumptions. 
Here we conduct experiments to empirically verify the assumption and demonstrate the convergence outcomes of three proposed methods with both TV and PnP priors. For each iteration, we  plot the relative errors (RE) defined as $\|\h C^{k+1}-\h C^k\|_F^2+\|\h D^{k+1}-\h D^k\|_F^2+\|\h E^{k+1}-\h E^k\|_F^2+\|\h g^{k+1}-\h g^k\|_F^2 $
 to verify the assumption in \Cref{thm1}.  In addition, we also plot the RMSE in \eqref{eq:rmse}
 with respect to the iteration. 
Fig.  \ref{convergence} shows that the value of RE goes to zero during the iteration, which is consistent with the assumption in \Cref{thm1}. 
Three methods achieve stable RMSE values after 100 outer iterations on both datasets. 
Hence we set the maximum iteration number as 100 in the following experiment to reduce the computation time. 
\begin{table}[htbp]
\scriptsize
\caption{Comparison of the computational time (units: seconds). Particle and Brine data have the image size of $379 \times 520 \times 969$ and $761 \times 742 \times 969$, respectively. }
\begin{center}
\begin{tabular}{lcccccc}
\hline
   &Edge-50 &  LCF & RUM$_{\tiny \mbox{TV}}$ & RUM$_{\tiny \mbox{PnP}_{\tiny 1}}$  &  RUM$_{\tiny \mbox{PnP}_{\tiny 2}}$ \\
\noalign{\smallskip}\hline\noalign{\smallskip}	
 Particle    & \textbf{2.40} &   4.11 &   9.78  &  13.51	 &    210.54  \\ 
Brine & \textbf{10.97} &  31.40  & 42.20   &  46.72     &  333.59  \\ \hline    
\end{tabular}
\end{center}
\label{computational_time}
\end{table}

 \textbf{Running Time}. In order to find a balance between performance improvement and computational efficiency, we conducted experiments to evaluate the running times of various datasets on the CPU. 
 It is worth noting that all of these experiments were conducted on a computer equipped with an Intel i7-6348 2.6 GHz CPU and 16 GB of RAM.
 Table \ref{computational_time} shows the average computation time for the Particle and Brine datasets. The edge-50 method requires the least amount of time because it does not require any iterative computation. RUM$_{\tiny \mbox{TV}}$ is much smaller than the other priors. In particular, when the data size is larger,  the running time of RUM with DnCNN is much smaller than BM3D. Our observations indicate that the running time of the proposed framework depends mainly on the complexity of the different priors. 

\textbf{Selection of the PnP prior}.
The proposed PnP framework offers great flexibility, allowing us to incorporate various state-of-the-art image-denoising approaches. This study examines two notable denoisers: DnCNN~\cite{zhang2017beyond}  and BM3D~\cite{dabov2007image}. The results in Table \ref{compare_pnp} demonstrate that both methods significantly enhance the PSNRs and SSIMs across different datasets. However, when observing Fig.  \ref{particle_pnp}, it becomes evident that RUM with BM3D achieves superior outcomes. Therefore, we employ BM3D as our chosen denoising approach for the following experiments, referred to as RUM$_{\tiny \mbox{PnP}}$.
\begin{table}[htbp]
\scriptsize
\caption{Comparison of PSNR (dB) and SSIM for two datasets using different PnP priors ($\sigma=3$).}
\begin{center}
\begin{tabular}{lcccc}
\hline
    & \multicolumn{2}{c}{Particle} & \multicolumn{2}{c}{Brine} \\
    \cmidrule(lr){2-3}\cmidrule(lr){4-5}
    & PSNR& SSIM & PSNR&SSIM \\
\noalign{\smallskip}\hline\noalign{\smallskip}	
 $\mbox{PnP}_1$   &  33.20  & 0.84      & 34.51 & 0.78\\
 $\mbox{PnP}_2$ &   \textbf{37.36}  &  \textbf{0.96}     & \textbf{39.85} & \textbf{0.94} \\ \hline 
\end{tabular}
\end{center}
\label{compare_pnp}
\end{table}
\begin{figure}[htbp]
		\begin{center}
				\includegraphics[width=0.47\textwidth]{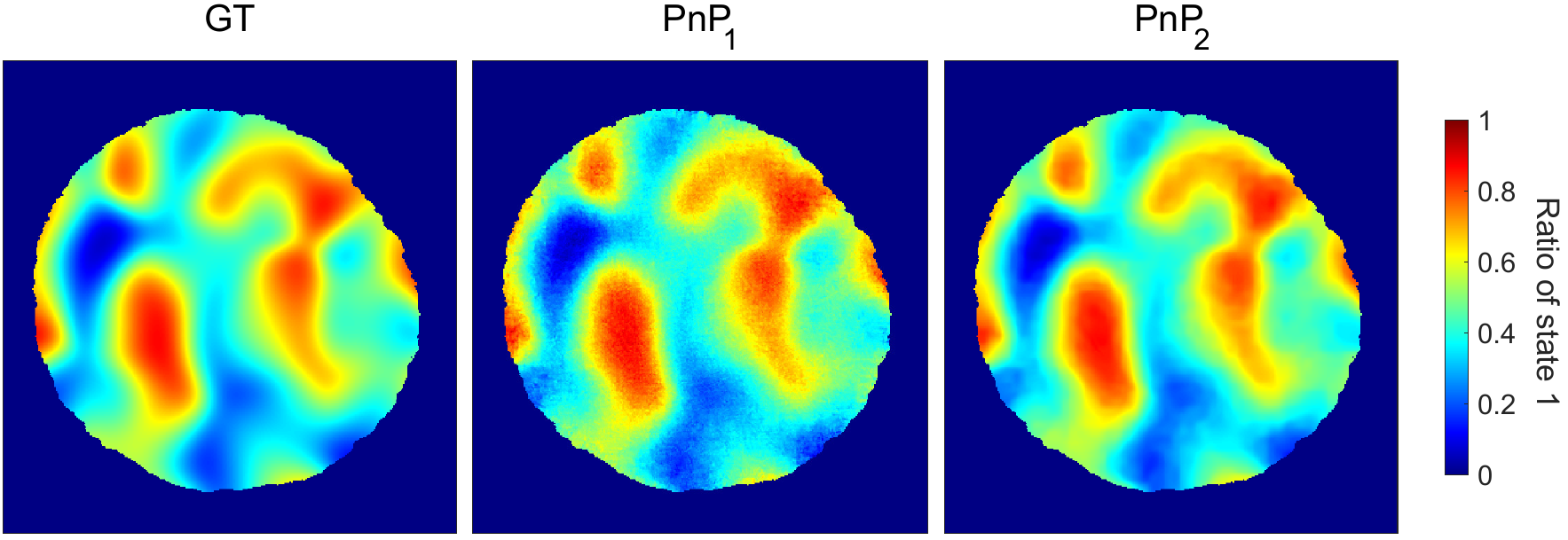} \\
    \includegraphics[width=0.47\textwidth]{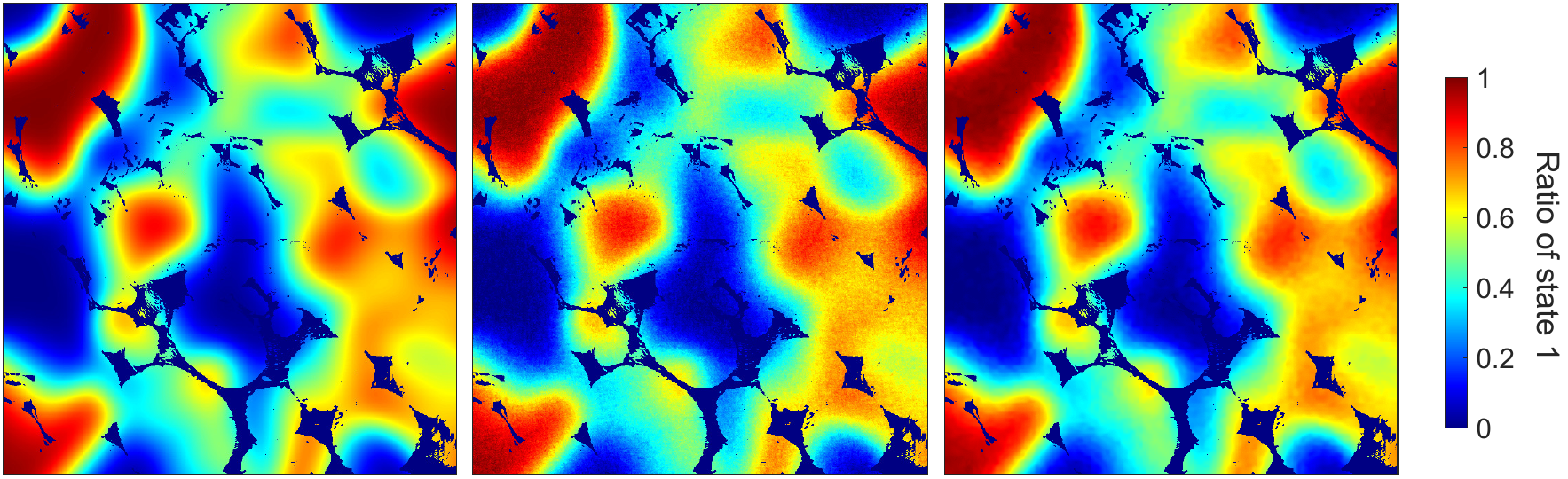}
		\end{center}
		\caption{A visual comparison of the chemical phase map for the different PnP priors on the Particle data (top) and the Brine data (bottom) when $\sigma=3$. }
  \label{particle_pnp}
\end{figure}
\subsection{Results of synthetic datasets}
\begin{figure*}[htbp]
       \centering
       \includegraphics[width=0.95\textwidth]{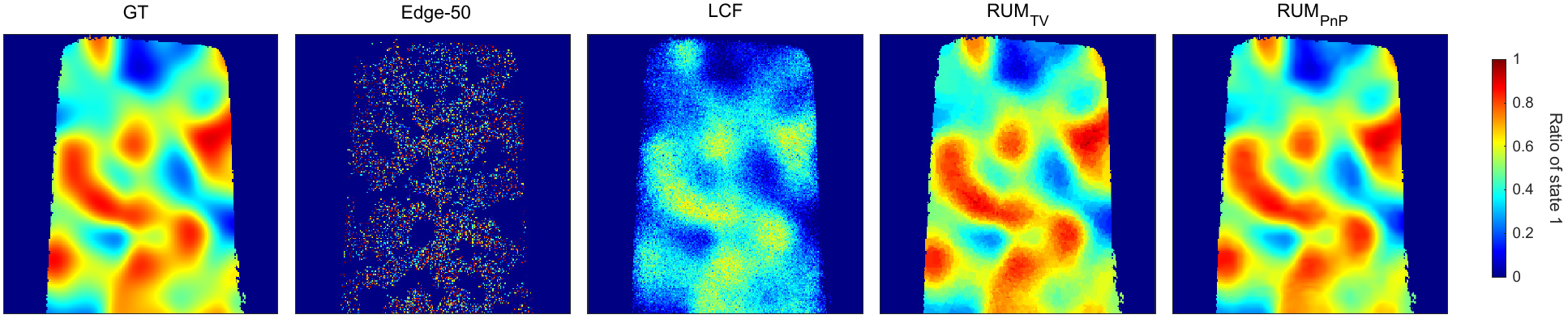} 
       \\
       \includegraphics[width=0.951\textwidth]{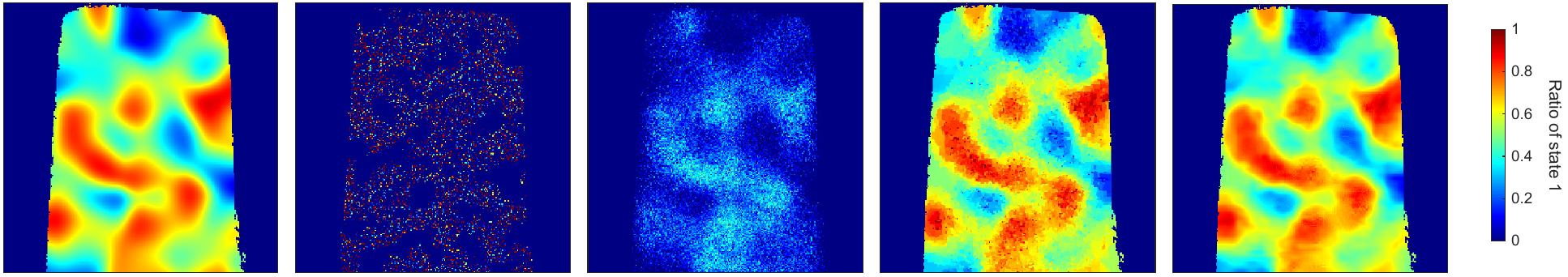}
       \caption{A visual comparison of the chemical phase map for various methods on the Round data under different noise levels ($\sigma=3$ on the top and $\sigma=7$ on the bottom). Note that the other chemical map is the reverse since $L=2$.}
        \label{2spectrum_3sigma}
 \end{figure*} 

 \begin{table*}[htbp]
\caption{Comparison of PSNR (dB) and SSIM for simulated datasets using different approaches and noise levels.}
\begin{center}
\begin{tabular}{lccccccccc}
\hline
 \multirow{2.5}{*}{Test set} & \multirow{2.5}{*}{$\sigma$} & \multicolumn{2}{c}{Edge-50} &  \multicolumn{2}{c}{LCF}  & \multicolumn{2}{c}{RUM$_{\tiny \mbox{TV}}$}   &  \multicolumn{2}{c}{RUM$_{\tiny \mbox{PnP}}$ } \\
\cmidrule(lr){3-4}\cmidrule(lr){5-6}\cmidrule(lr){7-8}\cmidrule(lr){9-10}
  &        & PSNR& SSIM & PSNR&SSIM & PSNR&SSIM &  PSNR&SSIM\\
\noalign{\smallskip}\hline\noalign{\smallskip}	
\multirow{4}{*}{Particle}  &  1 & 8.98& 0.10 	 & 16.11& 0.57   &   38.22&0.94     & \textbf{42.65}&\textbf{0.98}   \\
        & 3  & 5.96 & 0.08   & 9.33&0.28  &  32.64& 0.87   &  \textbf{37.36}& \textbf{0.96}  \\
        & 5   & 5.73&0.07  & 6.80&0.22 & 29.59&0.76   & \textbf{34.39}& \textbf{0.93} \\
  	  & 7  & 5.71&0.07  & 5.26&0.20  &    27.63&0.69  & \textbf{31.39}& \textbf{0.87}   \\\hline
\multirow{4}{*}{Electrode} & 1     &  9.19 &0.23   & 17.37&0.51  &   42.39&0.95   &  \textbf{46.79}& \textbf{0.99}   \\
          & 3    & 6.23&0.22    & 10.00&0.36 & 38.38&0.92   & \textbf{42.27}& \textbf{0.98}   \\
          & 5  & 6.01&0.22   & 7.29&0.35&  33.65&0.83  & \textbf{37.97}&\textbf{0.95}     \\
  	  & 7   &  5.97&0.21  & 6.03&0.36 &  31.51&0.78  & \textbf{35.21} &\textbf{0.92}     \\\hline
 \multirow{4}{*}{Polymer}  & 1   & 11.20&0.23   &  19.94&0.59 &  42.36&0.95   &  \textbf{47.10} & \textbf{0.99}   \\
            & 3   & 6.56&0.22  & 11.86&0.39 &  37.85&0.89 &   \textbf{42.74}& \textbf{0.98}    \\ 
            & 5     & 6.07&0.21 & 8.94&0.35 &  32.44&0.75    &  \textbf{38.53} & \textbf{0.94}   \\ 
     	& 7   & 5.97&0.21 &  7.27&0.35  &  30.00&0.68  &  \textbf{34.29}& \textbf{0.86}  \\ \hline  
 \multirow{4}{*}{Wedge}  & 1   & 10.04&0.23  & 19.23&0.56  &  47.26&0.99  & \textbf{51.01} & \textbf{1.00}  \\
            & 3   & 6.58&0.21    & 10.97&0.35 &  38.98&0.93   & \textbf{43.35} &\textbf{0.99}    \\ 
            & 5    & 6.23&0.21  & 8.15&0.34  &  33.65&0.83   & \textbf{38.14}&\textbf{0.95}    \\ 
     	& 7   & 6.13&0.21 &  6.88&0.32 &  31.41&0.76  &  \textbf{34.86}&\textbf{0.89}  \\ \hline   
\multirow{4}{*}{Round}  & 1   & 7.06&0.04   & 12.17&0.50  &  35.28&0.92 &  \textbf{39.58} & \textbf{0.97}   \\
            & 3   & 4.89&0.01 &   6.81&0.28  & 28.53&0.80  &  \textbf{34.14}& \textbf{0.94}   \\ 
            & 5     & 4.66&0.01   &  5.06&0.22 & 26.35&0.70  & \textbf{30.78}&\textbf{0.88}  \\ 
     	& 7   & 4.63&0.01  &  3.95&0.19  &   24.43&0.63   &   \textbf{28.19} &\textbf{0.80} \\ \hline 
\multirow{4}{*}{Brine}  & 1   & 10.46&0.03  &  19.86&0.52 &   40.34&0.92    & \textbf{45.40}  & \textbf{0.97}  \\
            & 3   & 4.96&0.01  & 10.54&0.24 & 35.59&0.83   &  \textbf{39.85} & \textbf{0.94}  \\ 
            & 5    & 4.59&0.01  &  7.08&0.18 & 30.90&0.66  &  \textbf{37.01} & \textbf{0.91}  \\ 
     	& 7    & 4.55&0.01  &   5.17&0.17  &  28.52&0.56 & \textbf{33.00} & \textbf{0.80}  \\ \hline 
\end{tabular}
\end{center}
\label{differ_noise_result}
\end{table*}

\textbf{Different Noise Levels}. Table \ref{differ_noise_result} displays the performance of both traditional methods and our proposed unmixing methods incorporating TV and PnP priors with two reference spectra. The optimal results are highlighted in bold font. Overall, our two methods all outperform the traditional techniques for all the datasets. They exhibit remarkable robustness to a wide range of noise levels, particularly when the noise is substantial, as the chemical map is still reconstructed effectively. RUM$_{\tiny \mbox{PnP}}$ performs best under all noise conditions.
In Fig.  \ref{2spectrum_3sigma}, we compare the chemical phase maps of Round data obtained from various approaches under two kinds of noise levels. Except for edge-50, we observe that the estimated phase maps are consistent with the ground truth. However, under strong noise, our proposed methods yield less noisy phase maps closer to the ground truth and preserve the image details. 
The edge-50 and LCF methods are pixel-based and do not consider the spatial-spectral correlations in XANES images. Our proposed RUM$_{\tiny \mbox{TV}}$ method employs fixed regularizers and lacks flexibility, while the PnP framework models priors using denoiser, thus eliminating the need for handcrafted regularizers.
\begin{table}[htbp]
\scriptsize
\caption{Comparison of PSNR (dB) and SSIM using different approaches with varying numbers of the reference spectra ($\sigma=3$).}
\begin{center}
\begin{tabular}{lccccccccc}
\hline
 \multirow{2.5}{*}{Test set} & \multirow{2.5}{*}{$L$}  &  \multicolumn{2}{c}{LCF} & \multicolumn{2}{c}{RUM$_{\tiny \mbox{TV}}$}  &  \multicolumn{2}{c}{RUM$_{\tiny \mbox{PnP}}$ }\\
\cmidrule(lr){3-4}\cmidrule(lr){5-6}\cmidrule(lr){7-8}
  &        & PSNR& SSIM & PSNR&SSIM & PSNR&SSIM \\
\noalign{\smallskip}\hline\noalign{\smallskip}	
 \multirow{3}{*}{Particle} &3  & 13.46&0.52& 22.56 &0.80 & \textbf{22.64}&  \textbf{0.92} \\
     &4   & 13.36&0.48 & 20.32 &0.73  & \textbf{21.23}  & \textbf{0.85} \\ 
     & 5  &  13.72 &0.48    & \textbf{21.31} &0.76 &  19.45 &\textbf{0.89}     \\ \hline
\multirow{3}{*}{Wedge}     &3  & 11.47&0.27 & 24.55& 0.74   &  \textbf{26.23} &\textbf{0.92}    \\
     &  4 & 10.66 &0.24 & 18.05&0.61  &  \textbf{19.34}&\textbf{0.75}  \\ 
     & 5 & 12.62 &0.25  & 19.26&0.66 & \textbf{20.13} & \textbf{0.80}\\ \hline  
\end{tabular}
\end{center}
\label{differ_spectrum_result}
\end{table}

\begin{figure}[htbp]
		\begin{center}
				\includegraphics[width=0.45\textwidth]{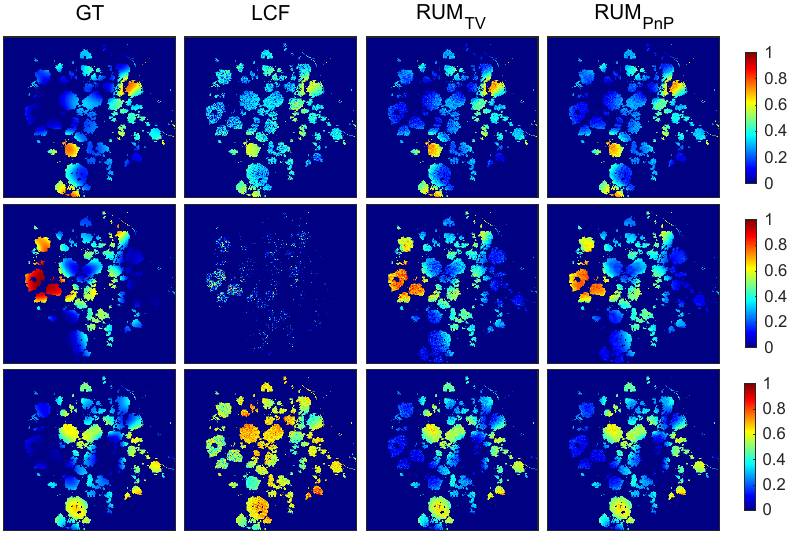} 
		\end{center}
		\caption{The visual comparison for the chemical phase maps of the various methods on Wedge data under three reference spectra ($\sigma=3$). From top to bottom: Ni valence state 1. 2, 3, respectively.}
   \label{3spectrum_3sigma}
	\end{figure}

	\begin{figure}[htbp]
		\begin{center}
				\includegraphics[width=0.45\textwidth]{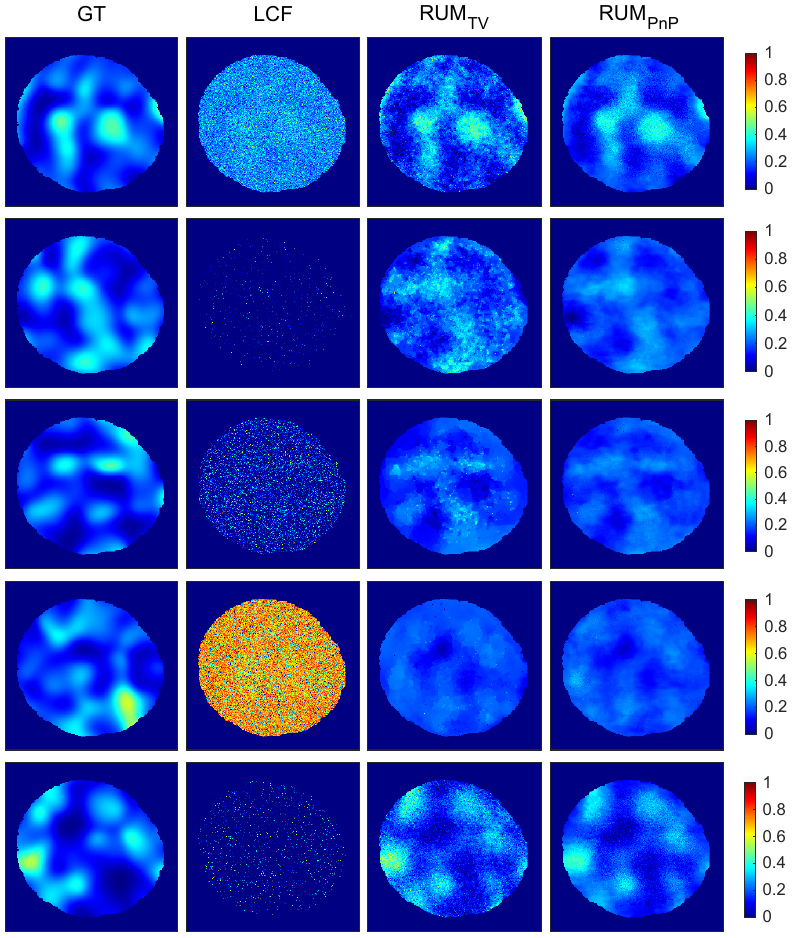} 
		\end{center}
		\caption{The visual comparison for the chemical phase maps of the various methods on Particle data under five reference spectra ($\sigma=3$). From top to bottom: Ni valence state 1, 2, 3, 4, 5, respectively.}
           \label{5spectrum_3sigma}
	\end{figure}

\textbf{Number of  reference spectra}. To evaluate the capability of unmixing multiple spectra for XANES data, we generated two datasets at reference spectra ($L=3, 4, 5$) when $\sigma=3$. Here different reference spectra represent different valence states of Ni. The results of PSNRs and SSIMs are presented in Table \ref{differ_spectrum_result}, demonstrating our proposed framework's robustness and superiority. Additionally, Fig.  \ref{3spectrum_3sigma} displays the phase maps of  three reference spectra using Wedge data, indicating that our phase maps are closer to the ground truth. Furthermore, Fig.  \ref{5spectrum_3sigma} shows the result with the number of reference spectra being 5. The phase maps with Particle data for Ni valence state 1, 2, and 5 obtained RUM method exhibit clearer structural details. However, the structure of the phase map for Ni valence state 3 and 4 is not very clear, suggesting a strong correlation between its reference spectra. Nevertheless,  RUM$_{\tiny \mbox{PnP}}$ still outperforms other methods in unmixing multiple spectra.
\subsection{Results of real dataset}
\label{c44}
\begin{figure}[htbp]
    \centering
    \includegraphics[width=0.4\textwidth]{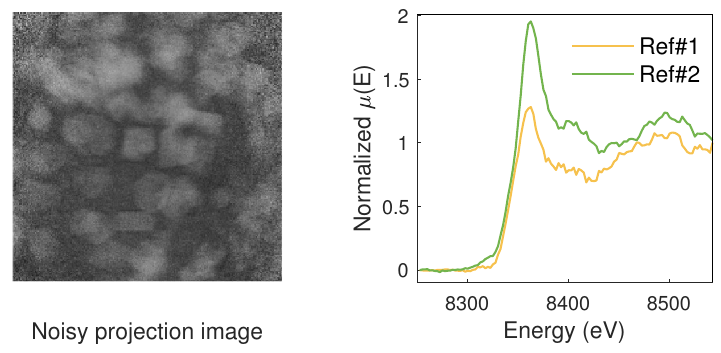} 
\caption{Low SNR projection image of TXM-XANES recording (left) and the reference spectra dictionary (right).}
    \label{projection_image}
\end{figure}
\begin{figure}[htbp]
    \centering
    \includegraphics[width=0.35\textwidth]{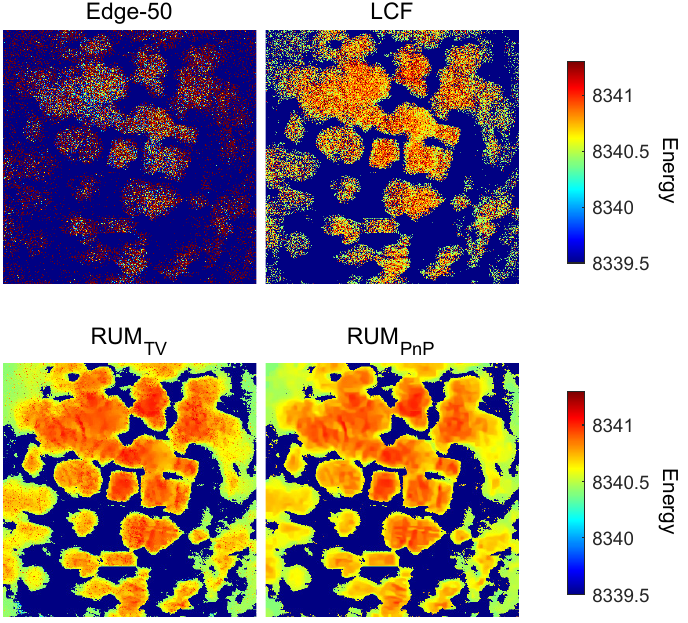} 
\caption{The chemical state maps, generated from noisy data using various methods, display the energy of the Ni valence state through their colors. Using RUM enhances the interpretability of real data and enables the identification of inter and intra-particle state differences. }
    \label{real_NCM}
\end{figure}
We apply the proposed RUM$_{\tiny \mbox{TV}}$ and RUM$_{\tiny \mbox{PnP}}$ methods to unmix actual real TXM-XANES data. The data comprises an image of numerous NCM particles on a charged cathode, as shown on the left in Fig.  \ref{projection_image}. The exposure time for a single frame was 0.5 seconds, and the data was captured at 117 energy points from 8180 eV to 8562 eV.

The NCM particle data exhibits an extremely low signal-to-noise ratio, making it challenging to discern the reference spectra of Ni elements in the range of 8180 eV to 8562 eV under practical conditions. Consequently, we can only determine that it contains Ni at different internal states, similar to the blind unmixing. To address this issue, we employ the denoising algorithm~\cite{li2022subspace} followed by VCA~\cite{nascimento2005vertex} to improve the signal-to-noise ratio and dictionary extraction. We focus  on two Ni states where the reference spectra of these states extracted from the range of 8180 eV to 8562 eV are illustrated on the right in Fig.  \ref{projection_image}. These techniques allowed us to overcome the low signal-to-noise ratio and extract valuable information from the NCM particle data. 

As shown in Fig.  \ref{real_NCM}, the proposed RUM algorithms  clearly distribute chemical elements in the NCM particle structure. Note that RUM$_{\tiny \mbox{TV}}$ has some theoretical guarantee on the convergence while the RUM$_{\tiny \mbox{PnP}}$ show better unmixing results in the synthetic experiments. On the other hand, due to the high noise levels in each projection image of NCM particles, the chemical map obtained using edge-50 and LCF fail to provide any meaningful information. 
Our methods simultaneously unmix and denoise the chemical imaging data, avoiding the accumulated error if we split these two processes. 
Additionally, the chemical phase map of NCM particles indicates an uneven reaction of the battery electrode, with some particles exhibiting a higher Ni valence state and others showing a lower Ni valence state.   
The utilization of the RUM unmixing method opens up avenues for enhanced understanding of spatiotemporally electrochemical reactions, enabling more profound insights and facilitating the optimization of composite electrode designs.

\section{Conclusion}\label{sec:concl}
This paper introduced a robust spectra unmixing framework to extract the chemical phase map signal for the widely-used X-ray imaging technique. Our proposed framework considered variance in spectra and maximized the use of spatial-spectral priors in X-ray microspectroscopy. It outperforms traditional methods when dealing with strong noise and spectral variability. Additionally, the framework exhibits favorable convergence properties for TV regularization, while the PnP prior performs better. 
Our future study includes the theoretical analysis of the proposed framework with the PnP prior.

\bibliographystyle{IEEEtranN}
\bibliography{IEEEabrv,IEEEexample}
 

\end{document}